\documentclass[conference]{IEEEtran}
\IEEEoverridecommandlockouts

\usepackage{cite}
\usepackage{amsmath,amssymb,amsfonts}
\usepackage{amsthm}
\usepackage[ruled,lined]{algorithm2e}
\usepackage{graphicx}
\usepackage{booktabs}
\usepackage{float}
\usepackage{url}
\usepackage[hidelinks]{hyperref}
\hypersetup{%
  pdftitle={Accelerating Visual Policy Learning with Sampling-Based Model Predictive Control},
  pdfauthor={Yilang Liu, Haoxiang You, Qian Wang, Daniel Rakita, Ian Abraham}}

\newtheorem{theorem}{Theorem}
\newtheorem{proposition}[theorem]{Proposition}

\begin{document}

\bstctlcite{SGPS:BSTcontrol}

\title{Accelerating Visual Policy Learning with Sampling-Based Model Predictive Control}

\author{Yilang Liu$^{1}$, Haoxiang You$^{1}$, Qian Wang$^{2}$, Daniel Rakita$^{2}$, and Ian Abraham$^{3}$
\thanks{$^{1}$Yilang Liu and Haoxiang You are with the Department of Mechanical Engineering, Yale University, New Haven, CT 06520, USA.
        {\tt\small \{yilang.liu, haoxiang.you\}@yale.edu}}%
\thanks{$^{2}$Qian Wang and Daniel Rakita are with the Department of Computer Science, Yale University, New Haven, CT 06520, USA.
        {\tt\small \{peter.wang.qw262, daniel.rakita\}@yale.edu}}%
\thanks{$^{3}$Ian Abraham is with the School of Electrical and Computer Engineering, University of Sydney, Sydney, NSW 2006, Australia.
        {\tt\small ian.abraham@sydney.edu.au}}%
}

\maketitle

\begin{abstract}
Learning visual policies for locomotion and manipulation requires coordinating contact with the environment and can incur substantial computation and GPU memory costs.
First-order policy gradients (FoPG) reduce training cost through differentiable simulation, but local optimization can converge to unintended contact patterns.
To address this shortfall, we propose Sampling-Guided Policy Search (SGPS), which couples recurring action-target refinement by sampling-based model-predictive control with first-order policy optimization.
Behavior cloning initializes the policy from sampled actions; training then alternates sampling-based refinement with short-horizon FoPG updates under perturbed initial states and randomized dynamics.
For visual policy training, we use a decoupled FoPG formulation that excludes rendering from the computation graph, enabling direct learning from depth observations without a state-policy teacher.
On a single GPU, SGPS learns policies for locomotion, obstacle traversal, crate pushing, and bimanual carrying on simulated Unitree Go2 and G1 robots.
Our experiments further show that refinement improves policy learning beyond initialization and tracking alone.
For hardware deployment, the distilled policy transfers zero-shot to a real Go2 and uses onboard depth to autonomously trot, crawl, clear hurdles, and switch between these behaviors.
\end{abstract}

\section{Introduction}\label{sec: intro}

Reinforcement learning has enabled legged locomotion over rough terrain~\cite{you2026efficientonpolicyvisualrlstochastic, lee2020learning} and perception-based obstacle traversal~\cite{zhuang2023parkour, cheng2024extreme, hoeller2024anymal}, supporting deployment in field tasks such as autonomous exploration and coverage~\cite{hughes2026asymptoticallyoptimalergodiccoverage}.
However, training these policies can still require large numbers of simulation steps collected in parallel environments~\cite{schulman2017ppo, makoviychuk2021isaac, rudin2022learning}, with additional computation and GPU memory for visual observations~\cite{you2025dva}.
To simplify visual policy training, recent works first learn a teacher policy using privileged state information, then train a student visual policy using onboard observations under the supervision of the teacher policy~\cite{lee2020learning, miki2022learning, kumar2021rma, chen2019lbc}.

To reduce training cost, methods such as SHAC~\cite{xu2021shac} compute first-order policy gradients (FoPG) through differentiable simulation~\cite{song2024learning, xing2024stabilizing}.
Building on this approach, D.Va~\cite{you2025dva} supports visual observations by decoupling rendering from the gradient computation.
However, local updates can produce unintended contact patterns when rewards do not distinguish gaits.
Reference behaviors can guide these updates toward the intended contacts.

A promising source of these references is sampling-based model-predictive control (MPC), which uses derivative-free optimization~\cite{williams2017information, howell2022predictivesamplingrealtimebehaviour, pinneri2020icem}.
For example, DIAL-MPC~\cite{xue2024dialmpc} generates locomotion through annealed sampling without prior policy training.
These rollouts are dynamically feasible under the planning model, but deploying MPC requires online optimization and state estimation.
The remaining challenge is to translate these trajectories into visual policies that tolerate deviations and execute without online MPC.

To address this challenge, we propose Sampling-Guided Policy Search (SGPS), centered on recurring sampling-based action-target refinement coupled with decoupled FoPG (Fig.~\ref{fig: teaser}).
MPC supplies BC initialization and a fixed tracking reference; refinement then generates new action targets to supplement local FoPG updates under state perturbations and randomized dynamics.
For visual learning, decoupled FoPG trains specialists from depth and tracking observations without a state-policy teacher.
For deployment, we distill the specialists into a single visual policy that transfers zero-shot to hardware and autonomously switches between behaviors using onboard observations and a velocity command, without online MPC or reference inputs.

Our contributions are:
\begin{itemize}
    \item SGPS, a policy-learning framework that combines recurring sampling-based action refinement with decoupled FoPG, refreshing action targets while keeping the tracking reference fixed.
    \item Direct training of visual policies with sampling-based supervision, without a separate state-policy teacher.
    \item Zero-shot deployment of a unified visual policy on Go2, with autonomous behavior transitions.
\end{itemize}

\begin{figure}[tb]
    \centering
    \includegraphics[width=\columnwidth]{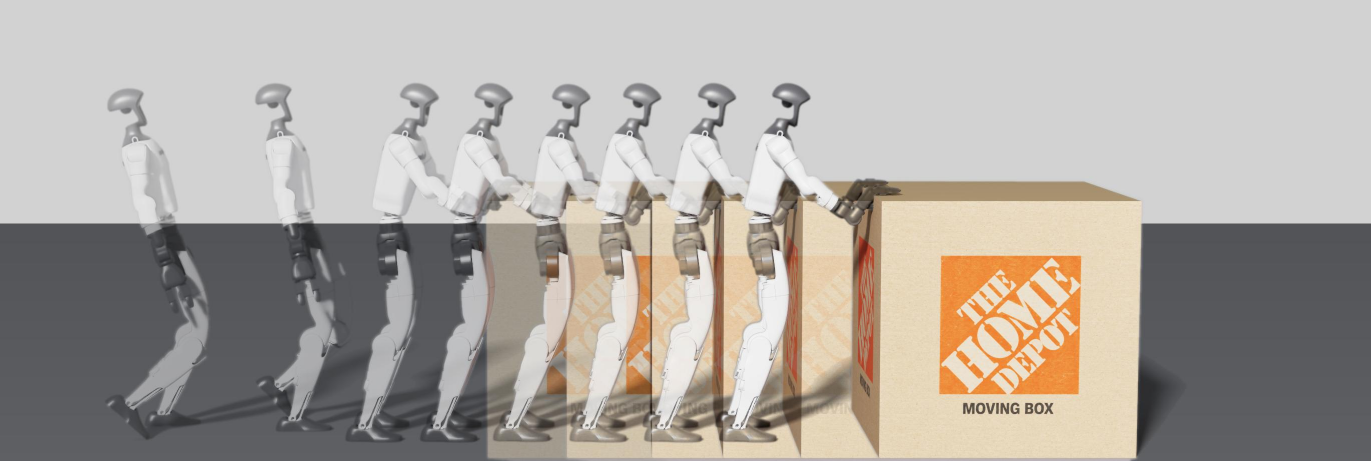}\\[2pt]
    \includegraphics[width=\columnwidth]{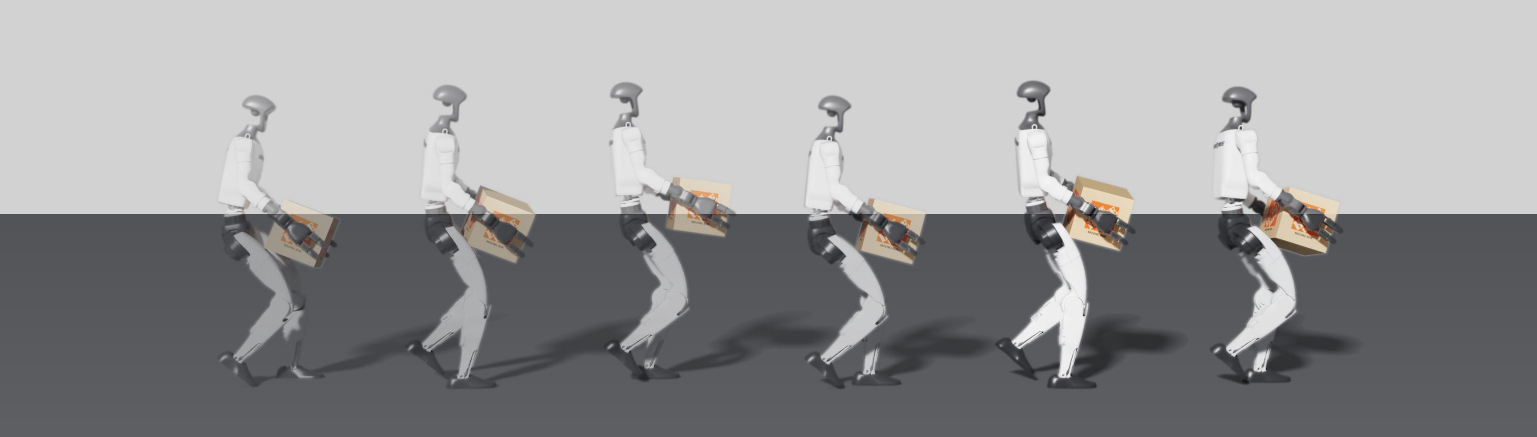}\\[2pt]
    \includegraphics[width=0.49\columnwidth]{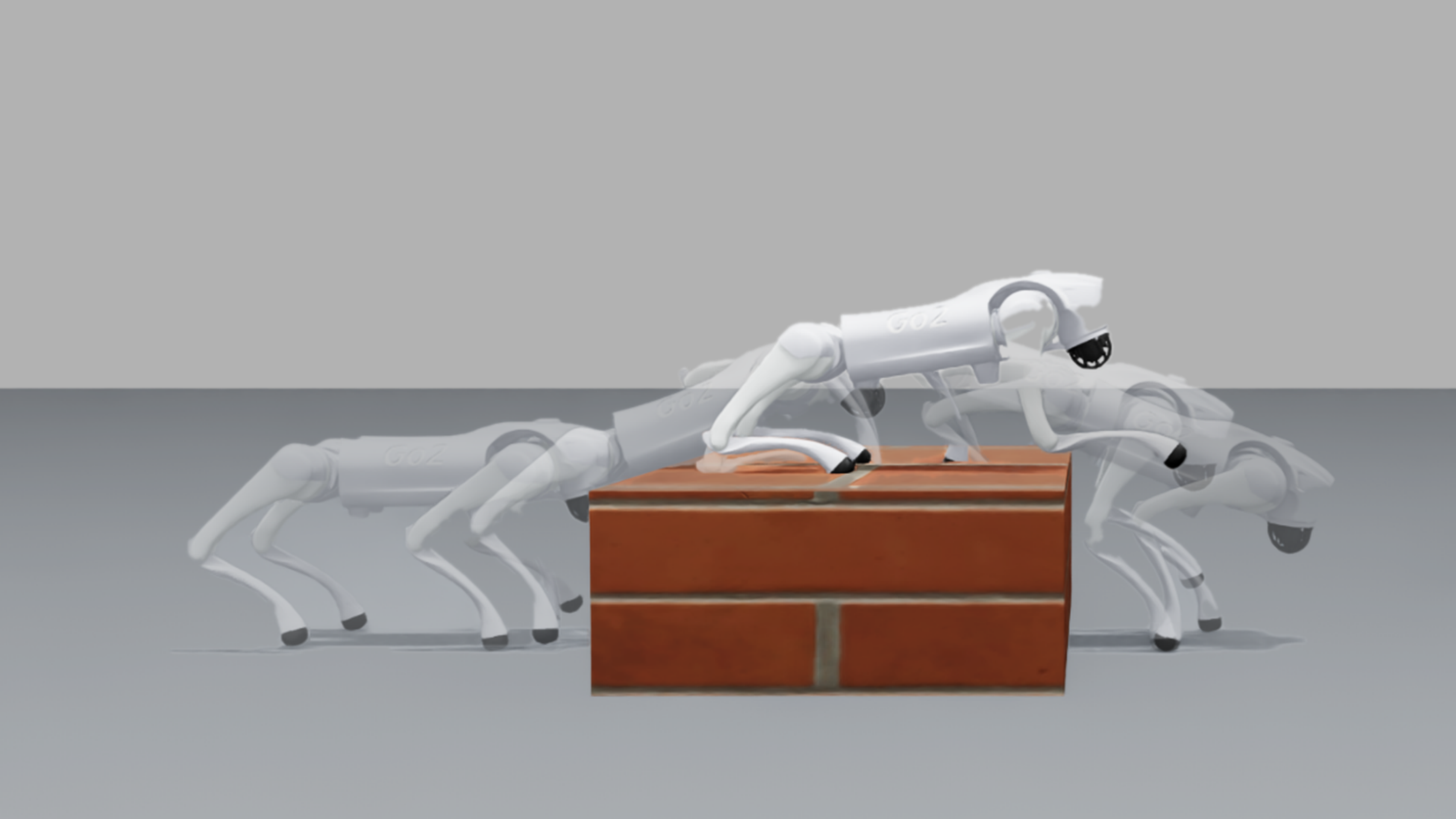}\hfill
    \includegraphics[width=0.49\columnwidth]{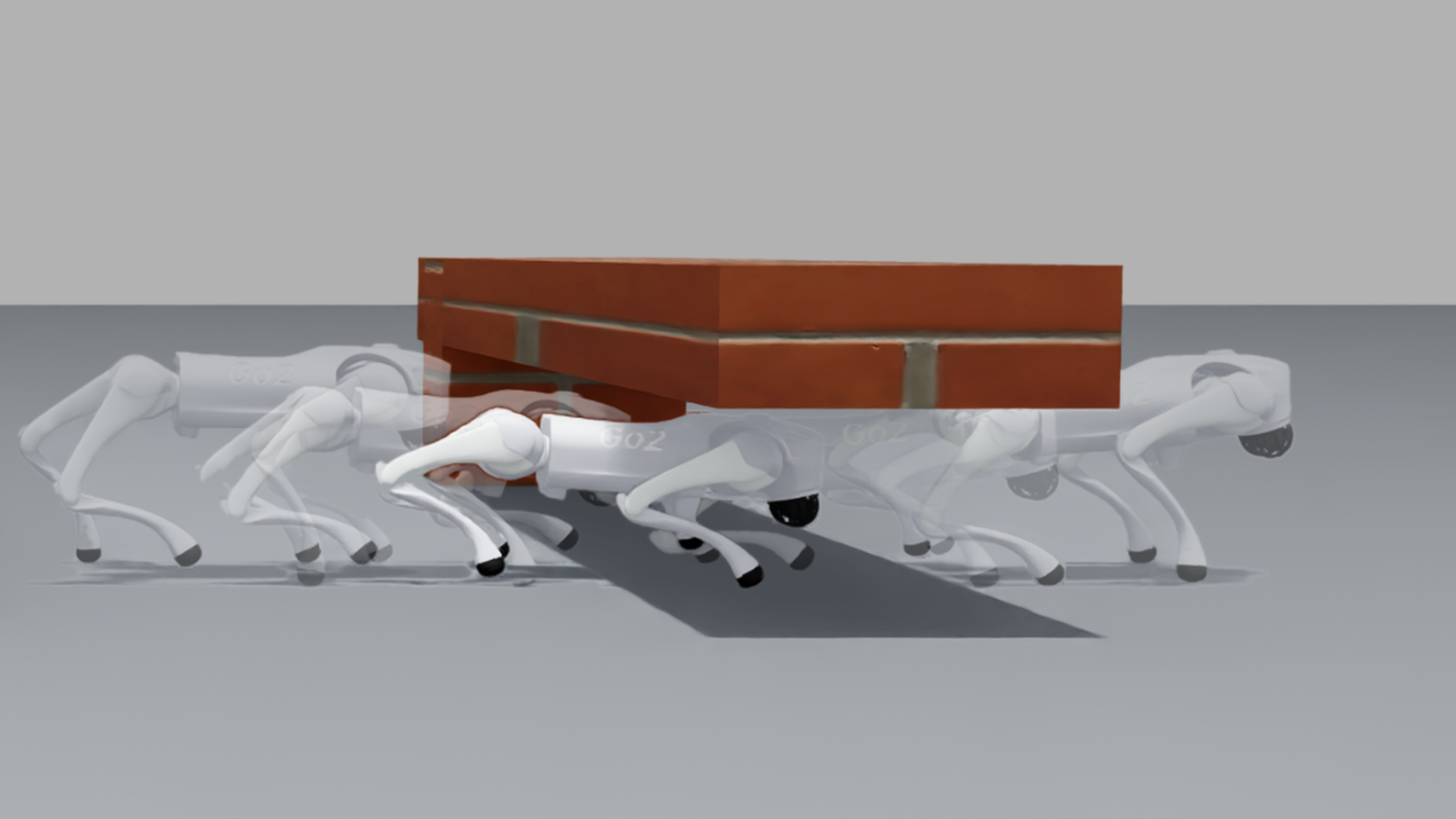}
    \caption{\textbf{SGPS policies across embodiments.}
    Overlaid poses from simulation rollouts.
    \emph{Top:} a qualitative G1 demonstration composes a trot approach
    segment with six crate-push bouts to move an $18$\,kg crate
    $1.4$\,m.
    \emph{Middle:} a Unitree G1 humanoid jogs while carrying a $4$\,kg
    box with both hands.
    \emph{Bottom left:} a Unitree Go2 clears a hurdle.
    \emph{Bottom right:} the Go2 lowers its body to crawl under a guard rail.}
    \label{fig: teaser}
\end{figure}

\section{Related Work}\label{sec: related work}

\paragraph{Learning legged locomotion}
Model-free RL trains locomotion policies in simulation~\cite{hwangbo2019learning, tan2018simtoreal, rudin2022learning}, with domain randomization supporting hardware transfer~\cite{tobin2017domain, peng2018simtoreal}.
To train policies that use onboard observations, privileged teachers can supervise students~\cite{lee2020learning, miki2022learning, kumar2021rma, chen2019lbc}, enabling perception-based locomotion and parkour~\cite{zhuang2023parkour, cheng2024extreme, hoeller2024anymal, agarwal2022legged}.
To shape the resulting behavior, training can use task-specific rewards~\cite{margolis2022walk} or motion imitation~\cite{peng2018deepmimic, peng2020animals}.
Following the reference-guided approach, SGPS uses references generated by sampling MPC on the training model.

\paragraph{Policy learning with differentiable simulation}
Differentiable simulators support first-order policy gradients~\cite{brax2021github, qiao2021efficient, mora2021pods}.
For example, SHAC~\cite{xu2021shac} combines short-horizon return gradients with a terminal value estimate; subsequent work studies locomotion~\cite{song2024learning} and optimization stability~\cite{xing2024stabilizing}.
Beyond state-based control, differentiable simulation and rendering also support visual policies~\cite{wiedemannwueest2023apg, liu2024differentiablerobotrendering, heeg2024quadrotot_vis_apg, luo2024residualpolicylearningperceptive}.
To avoid differentiating rendering, D.Va~\cite{you2025dva} uses decoupled FoPG.
Building on this formulation, SGPS adds sampling-based refinement to supply action targets during training.

\paragraph{Sampling-based MPC and distillation}
Sampling-based MPC evaluates candidate rollouts without differentiating the dynamics or reward~\cite{williams2017information, howell2022predictivesamplingrealtimebehaviour, pinneri2020icem}.
For locomotion, DIAL-MPC uses annealed sampling~\cite{xue2024dialmpc}; related work addresses hybrid modes and contact interactions~\cite{liu2025hybrid, liu2026samplebasedhybridmodecontrol}.
However, these methods require online optimization and state estimation: DIAL-MPC uses motion capture in hardware experiments, whereas sample-based hybrid mode control uses an onboard EKF.
To transfer planned behaviors into policies, guided policy search connects trajectory optimization with policy learning~\cite{levine2013gps, mordatch2014combining}.
Similarly, MPC supervision and dataset aggregation provide action targets~\cite{zhang2016mpcgps, ross2011reduction}, including for visual students~\cite{chen2019lbc, loquercio2021learning, mu2025state2vis_dagger}.
SGPS builds on this connection by adding recurring sampling-based action-target refinement to differentiable reference tracking, then distills specialists for deployment without online MPC or reference inputs.

\section{Background}\label{sec: background}

\paragraph{Problem formulation}
We consider differentiable dynamics $\mathbf{s}_{t+1}=f(\mathbf{s}_t,\mathbf{a}_t)$ and observations $\mathbf{o}_t=g(\mathbf{s}_t)$, where $g$ need not be differentiable.
The policy uses a reparameterized tanh-squashed Gaussian:
\begin{equation}
\begin{aligned}
\mathbf{z}_t&=\boldsymbol{\mu}_{\boldsymbol{\theta}}(\mathbf{o}_t)
+\boldsymbol{\sigma}_{\boldsymbol{\theta}}(\mathbf{o}_t)\odot\boldsymbol{\epsilon}_t,\\
\mathbf{a}_t&=\tanh(\mathbf{z}_t)\in[-1,1]^{d_a},
\qquad \boldsymbol{\epsilon}_t\sim\mathcal{N}(\mathbf{0},\mathbf{I}).
\end{aligned}\label{eq: policy reparameterization}
\end{equation}
The objective is to maximize expected discounted return over initial states and policy noise:
\begin{equation}
\max_{\boldsymbol{\theta}}\mathcal{V}(\boldsymbol{\theta})
=\max_{\boldsymbol{\theta}}\mathbb{E}\!\left[\sum_{t=0}^{T-1}\gamma^t R(\mathbf{s}_t,\mathbf{a}_t)\right].
\label{eq: policy objective}
\end{equation}

\paragraph{Decoupled first-order policy gradients}
FoPG differentiates returns through the simulator.
To limit instability over long horizons~\cite{metz2021gradients, suh2022differentiable}, SHAC~\cite{xu2021shac} uses $h$-step windows and a terminal critic:
\begin{equation}
\begin{split}
\mathcal{L}_{\boldsymbol{\theta}}=-\frac{1}{Nh}\sum_{i=1}^N\Big[&\sum_{t=0}^{h-1}\gamma^t R(\mathbf{s}_t^{(i)},\mathbf{a}_t^{(i)})\\
&+\gamma^h V_\phi(\mathbf{s}_h^{(i)})\Big].
\end{split}\label{eq: shac actor loss}
\end{equation}
Here $N$ is the number of parallel rollouts, time is local to each window, and $V_\phi$ is fit to TD-$\lambda$ targets.
The dynamics and reward must be differentiable along these windows.
D.Va~\cite{you2025dva} instead evaluates the actor at $\mathrm{sg}[\mathbf{o}_t]$, where $\mathrm{sg}$ denotes stop-gradient.
This excludes observation construction, including rendering, from the computation graph while retaining gradients through actions and dynamics to the policy, including its visual encoder.
The resulting update is a quasi-gradient that omits derivatives through $g$.

\paragraph{Sampling-based MPC}
MPPI-style control samples candidate action plans, weights them by exponentiated rollout returns, and executes the first action of the weighted plan before replanning~\cite{williams2017information}.
DIAL-MPC~\cite{xue2024dialmpc} represents plans by spline nodes and anneals sampling noise across iterations and along the horizon.
These searches require only forward simulation and allow nondifferentiable reference objectives.

\section{Sampling-Guided Policy Search}\label{sec: method}

\begin{figure*}[tb]
    \centering
    \includegraphics[width=\textwidth]{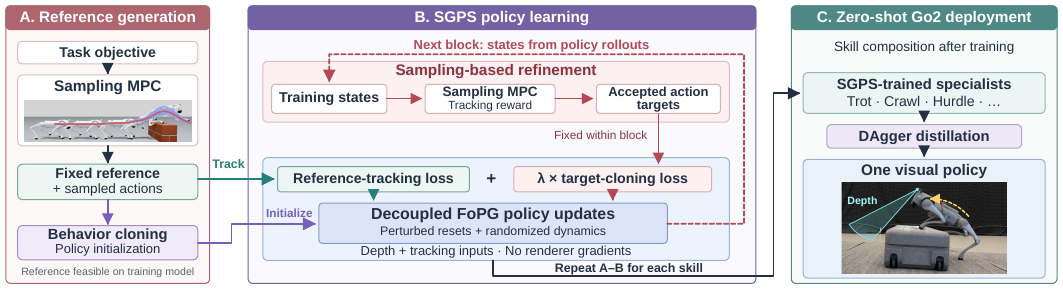}
    \caption{\textbf{SGPS learning and deployment.}
    (A) Sampling MPC supplies a fixed tracking reference and actions for behavior-cloning initialization.
    (B) Each block starts with sampling-based refinement, followed by decoupled FoPG updates combining tracking and cloning of accepted targets.
    (C) Distillation yields one Go2 policy with autonomous visual behavior transitions, onboard sensing, and velocity-only user input, without reference inputs or online MPC.}
    \label{fig: pipeline}
\end{figure*}

\subsection{Overview}\label{sec: overview}
The central component of SGPS is recurring sampling-based refinement: search supplies action targets that complement local FoPG tracking updates (Fig.~\ref{fig: pipeline}, Algorithm~\ref{alg: sgps}).
Sampling MPC first provides a reference and BC initialization; training then alternates refinement and policy updates under randomized dynamics and perturbed resets.

\subsection{Reference generation by annealed sampling}\label{sec: reference generation}
Following DIAL-MPC~\cite{xue2024dialmpc}, we represent actions $0,\ldots,H_s$ by $M+1$ nodes decoded through a quadratic spline.
At iteration $i$, candidate $n$ perturbs nominal node $m$ as
\begin{equation}
\begin{aligned}
\mathbf{y}^{(n)}_m&=\operatorname{clip}(\bar{\mathbf{y}}_m+
\sigma_{i,m}\boldsymbol{\xi}^{(n)}_m,-1,1),\\
\sigma_{i,m}&=\beta_{\mathrm{iter}}^{\,i}\beta_{\mathrm{hor}}^{\,M-m},
\qquad \boldsymbol{\xi}^{(n)}_m\sim\mathcal{N}(\mathbf{0},\mathbf{I}).
\end{aligned}\label{eq: node sampling}
\end{equation}
We fix $\boldsymbol{\xi}^{(n)}_0=\mathbf{0}$ for continuity with the preceding plan.
The factors $\beta_{\mathrm{iter}},\beta_{\mathrm{hor}}\in(0,1]$ anneal exploration across iterations and toward the start of the horizon.
Clipping bounds the nodes; each decoded candidate contains $H_s+1$ transitions.

We include the nominal plan as candidate $n=0$ and evaluate each candidate's mean task reward $r_n$ on the training model.
With $s_R=\max(\operatorname{Std}_{\ell=0:N_s}[r_\ell],\epsilon_R)$, the update is
\begin{equation}
\begin{aligned}
 w_n=\frac{\exp((r_n-r_0)/(\eta s_R))}
 {\sum_{\ell=0}^{N_s}\exp((r_\ell-r_0)/(\eta s_R))},\\
 \bar{\mathbf{y}}_m\leftarrow\sum_{n=0}^{N_s}w_n\mathbf{y}^{(n)}_m.
\end{aligned}
 \label{eq: sampling update}
\end{equation}
Here $\eta>0$ is the temperature and $\epsilon_R>0$ prevents division by zero.
The controller executes the first action, shifts and refits the spline, and replans.
The recorded trajectory $\bar{\tau}=\{(\bar{\mathbf{s}}_t,\bar{\mathbf{a}}_t)\}_{t=0}^{T-1}$ becomes a reference only after passing a task-level success test fixed before training.
The sampling objective $R_{\mathrm{task}}$ need not be differentiable.

\subsection{Behavior-cloning initialization}\label{sec: warm start}
The initialization dataset $\mathcal{D}_{\mathrm{init}}$ contains each clean reference state and $K_{\mathrm{aug}}$ noisy copies,
$\hat{\mathbf{s}}_{t,k}=\bar{\mathbf{s}}_t+\boldsymbol{\epsilon}_{t,k}$, with $\boldsymbol{\epsilon}_{t,0}=\mathbf{0}$.
Each observation $g(\hat{\mathbf{s}}_{t,k})$ is paired with the same reference action $\bar{\mathbf{a}}_t$.
Cloning only clean reference observations can leave the policy vulnerable to compounding errors when its own actions move it away from the reference trajectory.
These copies broaden the observation distribution but do not provide recovery-action labels~\cite{ross2011reduction}.
We convert actions to finite pre-squash targets using
\begin{equation}
\mathbf{z}^*=\operatorname{atanh}\!\big(\operatorname{clip}(\mathbf{a}^*,-1+\epsilon_a,1-\epsilon_a)\big),
\label{eq: safe inverse tanh}
\end{equation}
with $\epsilon_a>0$.
For observation--target pairs in $\mathcal{D}$, define
\begin{equation}
\mathcal{L}_{\mathrm{BC}}(\boldsymbol{\theta};\mathcal{D})=
\frac{\sum_{(\mathbf{o},\mathbf{z}^*)\in\mathcal{D}}
\|\boldsymbol{\mu}_{\boldsymbol{\theta}}(\mathrm{sg}[\mathbf{o}])-\mathbf{z}^*\|_2^2}
{d_a\max(|\mathcal{D}|,1)}.
\label{eq: warmstart loss}
\end{equation}
This loss averages over pairs and action dimensions and is zero for an empty dataset.
Minimizing it on $\mathcal{D}_{\mathrm{init}}$ initializes the policy.

\subsection{Policy optimization with sampling-based refinement}\label{sec: tracking}
After BC initialization, we alternate sampling-based action-target refinement with FoPG tracking updates.
We organize training into $J$ blocks, each comprising a refinement step followed by $B$ actor--critic epochs using the accepted targets.

\paragraph{Reference tracking}
Specialist observations combine proprioception, reference joint pose and velocities, body-frame base error, previous action, and a phase clock.
The differentiable tracking reward is
\begin{equation}
 R_{\mathrm{track}}(\mathbf{s}_t,\varphi(t))=
 w_{\mathrm{alive}}+\sum_k w_k\exp[-c_k d_k(\mathbf{s}_t,\bar{\mathbf{s}}_{\varphi(t)})],
 \label{eq: tracking reward}
\end{equation}
where $\varphi(t)$ selects the reference frame, $d_k\geq0$ measures a tracking error, $c_k>0$ sets its scale, and $w_k\geq0$ weights its contribution.
Terms cover body and base pose, base twist and velocity, foot height, and joint angles.
Both FoPG and refinement use this reward, whereas initial reference generation uses $R_{\mathrm{task}}$.
Decoupled rollouts evaluate Eq.~\eqref{eq: policy reparameterization} at detached observations and use Eq.~\eqref{eq: shac actor loss} with transition reward
$R(\mathbf{s}_t,\mathbf{a}_t)=R_{\mathrm{track}}(\mathbf{s}_{t+1},\varphi(t+1))$.
An asymmetric critic may receive privileged training state.

\paragraph{Sampling-based refinement}\label{sec: refinement}
At each block, we select $N_g$ current training states with their reference indices and previous actions.
For a state-based policy, we roll out the current actor deterministically from each selected state:
\begin{equation}
\begin{aligned}
\mathbf{a}^{\mathrm{seed}}_t&=\tanh\!\left(\boldsymbol{\mu}_{\boldsymbol{\theta}_j}(g(\mathbf{s}_t))\right),\\
\mathbf{s}_{t+1}&=f(\mathbf{s}_t,\mathbf{a}^{\mathrm{seed}}_t),\qquad t=0,\ldots,H_g.
\end{aligned}\label{eq: refinement seed}
\end{equation}
We fit the resulting action sequence to spline nodes and decode it to form the nominal plan.
Its score is measured by rolling out the decoded plan, accounting for the change introduced by spline fitting.
For a visual policy, nominal plans contain zero actions; search evaluates simulator states, actions, and rewards without rendering or executing the visual actor.
Each segment contains $H_g+1$ transitions.
Sampling MPC searches around each nominal plan using the annealed update above with a scaled noise level.

On the nominal training model, the mean transition tracking score of a segment is
\begin{equation}
\bar R_{\mathrm{train}}(\tau)=\frac{1}{H_g+1}\sum_{t=0}^{H_g}
R_{\mathrm{track}}(\mathbf{s}_{t+1},\varphi(t+1)).
\label{eq: refinement score}
\end{equation}
Let $\tau_{\mathrm{seed}}$ denote the decoded nominal-plan rollout.
A returned segment $\tilde{\tau}$ is accepted only if
\begin{equation}
\bar R_{\mathrm{train}}(\tilde{\tau})\geq
\max\big(\kappa,\bar R_{\mathrm{train}}(\tau_{\mathrm{seed}})+\delta\big),
\label{eq: refine gate}
\end{equation}
where the minimum score $\kappa$ and margin $\delta\geq0$ are fixed before training.
This test concerns nominal-model segment scores and does not guarantee improved policy return.

For each accepted segment, we pair the pre-action observation $g(\tilde{\mathbf{s}}_t)$ with the returned action $\tilde{\mathbf{a}}_t$.
Equation~\eqref{eq: safe inverse tanh} converts these actions to fixed pre-squash targets in the buffer $\tilde{\mathcal{D}}_j$.
The targets supervise the policy at states visited by the refined plans, while FoPG updates optimize closed-loop tracking under perturbed initial states and randomized dynamics.
The actor minimizes
\begin{equation}
\mathcal{L}_{\mathrm{SGPS}}^{(j)}=
\mathcal{L}_{\boldsymbol{\theta}}^{\mathrm{track}}+
\lambda_j\mathcal{L}_{\mathrm{BC}}(\boldsymbol{\theta};\tilde{\mathcal{D}}_j),
\label{eq: sgps actor loss}
\end{equation}
with $\lambda_j=\lambda_0(1-j/J)$ for $j=0,\ldots,J-1$.
The cloning weight decreases from $\lambda_0$ to $\lambda_0/J$, reducing the contribution of fixed action targets as policy optimization proceeds.
Targets remain fixed for all $B$ epochs and are replaced at the next block.
Refinement runs even when no segment is accepted; an empty buffer contributes zero cloning loss.
The initial reference $\bar{\tau}$ remains fixed: refinement updates action targets, not the tracking trajectory.
Recurring target supervision thus complements FoPG while preserving the behavior objective.

\subsection{Visual policy learning}\label{sec: vision}
The same algorithm trains a visual specialist directly, without a privileged
teacher policy. The specialist receives a stack of egocentric depth
images encoded by a small convolutional network~\cite{yarats2021drqv2},
together with proprioceptive and reference-conditioned tracking observations.
The critic may retain
privileged state during training but is discarded at deployment.
The distilled Go2 generalist instead uses onboard observations and a velocity
command, without reference inputs (Section~\ref{sec: generalist}).

Depth is rendered from detached simulator states while physics remains
differentiable, training the visual encoder and actor without renderer
derivatives. For behavior
cloning, noise-augmented states can share the same clean depth stack. The actor
could then use image identity instead of the perturbed proprioceptive state.
We reduce this shortcut by randomly zeroing the depth stack for a fraction of
the initialization batch. After sampling-based refinement, depth frames are
rendered for the returned trajectories, and the acceptance mask selects the
pairs that contribute to $\tilde{\mathcal{D}}_j$.

\begin{algorithm}[tb]
\caption{Sampling-Guided Policy Search (SGPS)}\label{alg: sgps}
\KwIn{simulator $f$, observation model $g$, task reward $R_{\mathrm{task}}$,
tracking reward $R_{\mathrm{track}}$, thresholds $(\kappa,\delta)$,
initial weight $\lambda_0$, $J$ blocks of $B$ epochs}
\KwOut{closed-loop policy $\pi_{\boldsymbol{\theta}}$}
Generate and validate reference $\bar{\tau}$ with sampling MPC\;
Initialize actor by cloning $\mathcal{D}_{\mathrm{init}}$ (Eq.~\eqref{eq: warmstart loss})\;
\For{$j=0,\ldots,J-1$}{
    Select training states; seed plans with state-policy rollouts or zero actions for visual policies\;
    Refine plans with sampling MPC using $R_{\mathrm{track}}$ on the nominal model\;
    Form fixed target buffer $\tilde{\mathcal{D}}_j$ from segments passing Eq.~\eqref{eq: refine gate}; render accepted visual observations\;
    Set $\lambda_j=\lambda_0(1-j/J)$\;
    \For{$B$ actor--critic epochs}{
        Collect $h$-step decoupled rollouts with randomized dynamics and perturbed reference-state resets\;
        Update actor with Eq.~\eqref{eq: sgps actor loss}; fit critic\;
    }
}
\end{algorithm}

\subsection{Connection to decoupled FoPG}\label{sec: search target distillation}
Decoupled FoPG admits an interpretation as fitting action targets obtained by a return-gradient step~\cite{you2025dva}.
SGPS uses targets from sampling-based search. The following identity relates the target-fitting direction to the policy parameters.

\begin{proposition}[Search-target distillation]\label{prop: search target distillation}
Fix observations $\mathbf{O}$ and let $\mathbf{Z}_{\boldsymbol{\theta}}(\mathbf{O})$ collect deterministic pre-squash policy outputs.
At parameters $\boldsymbol{\theta}_0$, write $\mathbf{Z}_0=\mathbf{Z}_{\boldsymbol{\theta}_0}(\mathbf{O})$ and
$\mathbf{J}_0=\left.\partial\mathbf{Z}_{\boldsymbol{\theta}}/\partial\boldsymbol{\theta}\right|_{\boldsymbol{\theta}_0}$.
For fixed targets $\tilde{\mathbf{Z}}$, define
\begin{equation}
\mathcal{L}_{\mathcal{T}}(\boldsymbol{\theta})=
\frac12\left\|\mathbf{Z}_{\boldsymbol{\theta}}(\mathbf{O})-
\mathrm{sg}[\tilde{\mathbf{Z}}]\right\|_2^2.
\label{eq: search target loss}
\end{equation}
Its gradient-descent direction at $\boldsymbol{\theta}_0$ is
\begin{equation}
-\nabla_{\boldsymbol{\theta}}\mathcal{L}_{\mathcal{T}}(\boldsymbol{\theta}_0)
=\mathbf{J}_0^\top(\tilde{\mathbf{Z}}-\mathbf{Z}_0).
\label{eq: search target direction}
\end{equation}
Let $\mathcal{J}(\mathbf{Z})$ be the return obtained by executing $\tanh(\mathbf{Z})$ from a fixed initial state.
If $\tilde{\mathbf{Z}}=\mathbf{Z}_0+\beta\nabla_{\mathbf{Z}}\mathcal{J}(\mathbf{Z}_0)$, with $\beta>0$, then
\begin{equation}
-\frac{1}{\beta}\nabla_{\boldsymbol{\theta}}\mathcal{L}_{\mathcal{T}}(\boldsymbol{\theta}_0)
=\mathbf{J}_0^\top\nabla_{\mathbf{Z}}\mathcal{J}(\mathbf{Z}_0),
\label{eq: gradient target identity}
\end{equation}
which is the deterministic decoupled FoPG direction.
\end{proposition}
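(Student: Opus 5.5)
The argument is a direct chain-rule computation, and I would carry it out in three steps.

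\textbf{Step 1: the gradient of the target loss.} View $\mathbf{Z}_{\boldsymbol{\theta}}(\mathbf{O})$ as a stacked vector in $\mathbb{R}^{d}$, with $d$ equal to the number of observations times $d_a$. Then $\mathbf{J}_0$ is a $d\times\dim\boldsymbol{\theta}$ Jacobian. The stop-gradient means $\tilde{\mathbf{Z}}$ is a constant with respect to $\boldsymbol{\theta}$. Hence $\mathcal{L}_{\mathcal{T}}=\ell(\mathbf{Z}_{\boldsymbol{\theta}})$ with $\ell(\mathbf{Z})=\tfrac12\|\mathbf{Z}-\tilde{\mathbf{Z}}\|_2^2$ and $\nabla_{\mathbf{Z}}\ell=\mathbf{Z}-\tilde{\mathbf{Z}}$. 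The chain rule gives
\[
\nabla_{\boldsymbol{\theta}}\mathcal{L}_{\mathcal{T}}(\boldsymbol{\theta}_0)=\mathbf{J}_0^\top(\mathbf{Z}_0-\tilde{\mathbf{Z}}),
\]
and negating this yields Eq.~\eqref{eq: search target direction}. The only assumption needed is that the network outputs are differentiable at $\boldsymbol{\theta}_0$, which is implicit in the definition of $\mathbf{J}_0$.

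\textbf{Step 2: the gradient-step target.} Substitute $\tilde{\mathbf{Z}}-\mathbf{Z}_0=\beta\nabla_{\mathbf{Z}}\mathcal{J}(\mathbf{Z}_0)$ into Eq.~\eqref{eq: search target direction}. By linearity of $\mathbf{J}_0^\top$ this gives $\beta\,\mathbf{J}_0^\top\nabla_{\mathbf{Z}}\mathcal{J}(\mathbf{Z}_0)$. Dividing by $\beta>0$ gives Eq.~\eqref{eq: gradient target identity}. The target is formed from $\mathbf{Z}_0$ and then frozen, so no $\boldsymbol{\theta}$-dependence of $\nabla_{\mathbf{Z}}\mathcal{J}$ enters the computation.

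\textbf{Step 3: identification with decoupled FoPG.} This step needs the most care. In the deterministic decoupled rollout, the observations are evaluated at $\mathrm{sg}[\mathbf{o}_t]$, so they are exactly the fixed $\mathbf{O}$ collected along the trajectory. The actions are $\tanh(\mathbf{Z}_{\boldsymbol{\theta}}(\mathbf{O}))$, and the return depends on $\boldsymbol{\theta}$ only through $\mathbf{Z}$, that is, through actions propagated by the dynamics $f$. No derivatives pass through $g$.

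The decoupled objective is therefore $\mathcal{J}(\mathbf{Z}_{\boldsymbol{\theta}}(\mathbf{O}))$ with $\mathbf{O}$ held fixed. Its gradient at $\boldsymbol{\theta}_0$ is, by the chain rule, $\mathbf{J}_0^\top\nabla_{\mathbf{Z}}\mathcal{J}(\mathbf{Z}_0)$. Here $\nabla_{\mathbf{Z}}\mathcal{J}$ already contains the $\tanh$ derivative and backpropagation through the dynamics.

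The subtle point is that $\mathbf{O}$ must be the observations actually visited by the rollout at $\boldsymbol{\theta}_0$, and $\mathcal{J}$ must be defined with those observations frozen rather than recomputed from the perturbed states. I would state this explicitly as the meaning of "the return obtained by executing $\tanh(\mathbf{Z})$": an open-loop execution of the action sequence from the fixed initial state. With that reading, the two gradients coincide, which completes the proof.
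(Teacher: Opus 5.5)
Your proof is correct and follows the paper's argument. The paper holds observations and targets fixed and applies the chain rule to get $\nabla_{\boldsymbol{\theta}}\mathcal{L}_{\mathcal{T}}(\boldsymbol{\theta}_0)=\mathbf{J}_0^\top(\mathbf{Z}_0-\tilde{\mathbf{Z}})$, then substitutes the gradient-step target, exactly as your Steps 1--2 do. Your Step 3 goes slightly beyond the paper's proof, which only asserts (relying on D.Va) that $\mathbf{J}_0^\top\nabla_{\mathbf{Z}}\mathcal{J}(\mathbf{Z}_0)$ is the deterministic decoupled FoPG direction; your stop-gradient justification, with $\mathbf{O}$ taken as the observations visited at $\boldsymbol{\theta}_0$ and $\mathcal{J}$ read as open-loop execution, is a correct account of that remark.
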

\begin{proof}
Holding observations and targets fixed, the chain rule gives
$\nabla_{\boldsymbol{\theta}}\mathcal{L}_{\mathcal{T}}(\boldsymbol{\theta}_0)
=\mathbf{J}_0^\top(\mathbf{Z}_0-\tilde{\mathbf{Z}})$.
Negating yields Eq.~\eqref{eq: search target direction}; substituting the gradient-step target gives Eq.~\eqref{eq: gradient target identity}.
\end{proof}

The equivalence holds at the generating parameters; subsequent finite steps toward fixed targets need not equal FoPG updates.
Equation~\eqref{eq: warmstart loss} scales the target-fitting direction by $2/(d_a|\mathcal{D}|)$ for a nonempty dataset.
Sampling targets need not follow a return gradient, and their acceptance on the nominal model does not guarantee an increase in the policy's expected return after fitting.

\section{Experiments}\label{sec: experiment}
We evaluate Go2 trot and visual obstacle traversal, and G1 pushing and carrying.
We then distill Go2 specialists for autonomous visual obstacle traversal on hardware.

\subsection{Experimental Setup}\label{sec: setup}
\paragraph{Tasks and policy interfaces}
Go2 and G1 actors output 12 and 29 joint-position commands at $50$\,Hz, respectively, using
$\mathbf{q}^{\mathrm{cmd}}=\mathbf{q}_{\mathrm{anchor}}+\mathbf{d}\odot\mathbf{a}$.
Go2 uses its home pose as the anchor and repeats $\mathbf{d}=[0.4,0.8,0.8]$ for each leg; G1 uses task-specific anchors and scales.
The specialist tracking observation is
\begin{equation}
\begin{split}
\mathbf{o}^{\mathrm{track}}_t=[\,&\mathbf{g}^B_t,
\mathbf{q}_{t,7:}-\mathbf{q}_{\mathrm{home}},\mathbf{u}_{t-1},
\bar{\mathbf{q}}_{t,7:},\mathbf{e}^B_{p,t},\\
&\mathbf{v}^B_t,\bar{\mathbf{v}}^B_t,
\boldsymbol{\omega}^B_t,\sin\phi_t,\cos\phi_t\,].
\end{split}\label{eq: tracking observation}
\end{equation}
Here $B$ denotes the base or pelvis frame, bars denote the reference,
$\mathbf{u}_{t-1}=\mathbf{q}^{\mathrm{cmd}}_{t-1}$ is the previous joint command,
and $\mathbf{e}^B_{p,t}$ is the reference base-position error.
The vectors have 53 dimensions for Go2, 106 for G1 pushing, and 118 for G1 carrying.
G1 inputs include current and reference object coordinates.
Visual specialists fuse these vectors with three $64\times64$ egocentric depth frames; they therefore use depth together with tracking inputs.

The G1 simulated state contains the floating base, 29 actuated joints, and
the manipulated object. Pushing uses
$(\mathbf{q},\dot{\mathbf{q}})\in\mathbb{R}^{37}\times\mathbb{R}^{36}$,
with one crate slide coordinate; carrying uses
$\mathbb{R}^{43}\times\mathbb{R}^{41}$, with a free box joint.

Reference generation uses $N_s{=}2048$ sampled
action candidates in addition to the nominal plan, sampling temperature
$\eta{=}0.05$, and annealing factors
$(\beta_{\mathrm{hor}},\beta_{\mathrm{iter}})=(0.9,0.5)$. Sampling MPC runs two
iterations per control step and ten for the initial zero plan. Go2 trot uses six
spline nodes over a 20-step lookahead; the remaining Go2 skills use nine nodes
over 16 steps. G1 object-task reference generation retains the same
annealed-search procedure but uses task-specific action anchors, action scales,
and reward specifications.

The task reward specifies gait phase, duty ratio, cadence, swing amplitude,
base velocity, height, and uprightness. Obstacle tasks additionally use
position-indexed base-height waypoints and world-frame velocity tracking.
These terms specify the behavior sought by sampling MPC; policy optimization
uses the common tracking reward in Eq.~\eqref{eq: tracking reward}. The G1
push task adds explicit crate-progress tracking because base and joint terms
alone do not distinguish pushing from standing; the carry reference includes
the free-box pose so the common tracking objective covers the loaded system.

\paragraph{Training details}
We use MuJoCo MJX for simulation and train policies on a single NVIDIA
RTX 4080 GPU.
Initialization uses eight noisy copies of every clean reference state. FoPG
uses $h{=}32$-step windows in 64 parallel environments, reference-state
initialization, and per-environment dynamics randomization. Training uses
$J{=}25$ blocks, each with refinement from $N_g{=}16$ current training states
followed by $B{=}80$ actor--critic epochs. Refinement uses $26$-transition segments
($H_g{=}25$), six spline nodes, two iterations, $N_s{=}64$ sampled candidates,
and an initial noise scale of $0.1$. We set $\lambda_0{=}1$ and
use the linear schedule
$\lambda_j=\lambda_0\max(0,1-j/J)$. The standard budget is $2000$ epochs ($4.096$M FoPG rollout interactions, plus search rollouts). Figure~\ref{fig: carry vision curves} also shows longer carrying runs for comparison with DAgger. We train three seeds per G1 task and actor modality.

\paragraph{Baselines and plotted evaluation}
We compare against BC only, FoPG from scratch, PPO, and state-to-vision
DAgger. BC only stops after initialization, whereas FoPG from scratch starts
from random parameters. DAgger first trains a state expert, then distills a
visual student. All curves report deterministic evaluations on the nominal
model. For computational comparisons, training times include warm starts,
refinement search, and, for DAgger, expert training. Environment-step counts
also include search rollouts, accounting for refinement's cost in both
comparisons.

\subsection{Behavior Selection and Learning Efficiency}
\label{sec: behavior selection}\label{sec: trot results}
\paragraph{Primary locomotion setting}
The primary benchmark asks the Go2 to track a forward trot at $0.8$\,m/s on
flat terrain. The reference specifies diagonal foot pairs in phase, a $0.45$
duty ratio, a $2$\,Hz cadence, a $0.08$\,m swing height, and a $0.30$\,m base
height. Both SGPS variants, PPO, FoPG from scratch, and DAgger use three
training seeds.

\begin{figure}[t]
    \centering
    \includegraphics[width=\columnwidth]{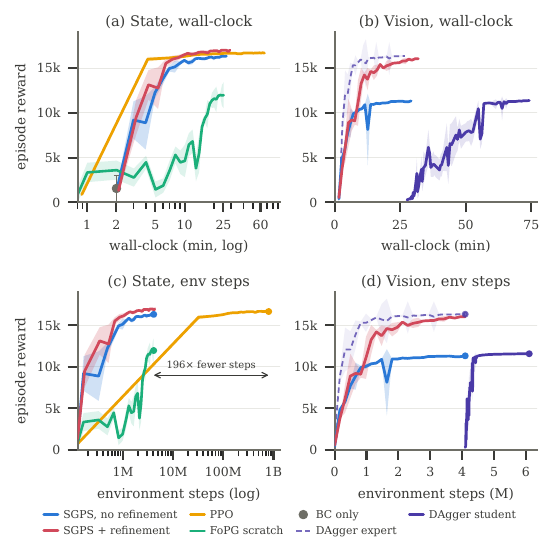}
    \caption{\textbf{Trot learning efficiency and refinement ablation.}
    Episode return versus training time (top) and environment interactions
    (bottom), for state (left) and visual (right) policies.
    Red includes recurring search and target cloning; blue retains BC
    initialization and FoPG tracking without refinement.
    SGPS shading shows $\pm1$ standard deviation across seeds.}
    \label{fig: single skill results}
\end{figure}

Figure~\ref{fig: single skill results} shows that full SGPS achieves high
returns with both state and visual observations.
For state-based learning, SGPS reaches a higher return than FoPG from scratch
over the reported budgets, while PPO uses $196$ times as many environment
interactions as SGPS.
For visual learning, SGPS trains directly after initialization, whereas
DAgger first requires a separately trained state expert.

\paragraph{Behavior selection}
\label{sec: gait library}
To illustrate a failure under a gait-blind reward, we train a policy from
scratch with the joint-angle tracking weight $w_{\mathrm{joint}}$ set to zero.
This removes direct joint-pose supervision while keeping the other tracking
terms unchanged.

\begin{figure}[t]
    \centering
    \includegraphics[width=\columnwidth]{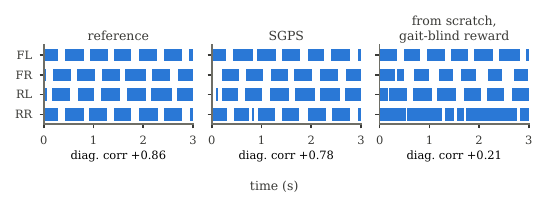}
    \caption{\textbf{Go2 foot-contact patterns.}
    Left: sampling MPC reference. Center: SGPS. Right: FoPG trained from
    scratch without joint-angle tracking.
    Blue intervals indicate ground contact for the front-left (FL),
    front-right (FR), rear-left (RL), and rear-right (RR) feet.
    Values report mean diagonal-pair contact correlation (FL--RR and FR--RL).}
    \label{fig: footfall}
\end{figure}

Figure~\ref{fig: footfall} shows alternating diagonal foot contacts in both
the sampled reference and SGPS, with mean diagonal-pair correlations of
$+0.86$ and $+0.78$, respectively.
The gait-blind policy trained from scratch instead produces a weakly
coordinated shuffle ($+0.21$).
This illustrates a failure to recover the intended trot without BC
initialization or joint-angle supervision.
Figure~\ref{fig: single skill results} separately evaluates refinement with
these settings held fixed.

\subsection{Ablation of Sampling-Based Refinement}
\label{sec: refinement results}\label{sec: ablations}
We test the contribution of recurring refinement beyond initialization and
FoPG tracking.

\paragraph{Controlled comparison}
``SGPS + refinement'' denotes the full method. ``SGPS, no refinement''
retains BC initialization and reference tracking, but omits recurring
search and accepted-target cloning in Eq.~\eqref{eq: sgps actor loss}.
Both variants use three training seeds and share all other settings,
including reference generation, BC initialization, architecture, tracking
reward, dynamics randomization, and FoPG optimization.
The ablation tests recurring search and target supervision jointly,
without separating the gate, search initialization, or cloning schedule.

\paragraph{Go2 trot}
For state-based trot, both SGPS variants reach high returns, with a modest
gain from refinement (Fig.~\ref{fig: single skill results}a,c).
Thus, much of the efficiency relative to PPO remains with sampled-reference
initialization and tracking alone.
The visual comparison shows a larger gap: near $4.1$M interactions, full
SGPS reaches approximately $16{,}000$, while the variant without refinement
levels off near $11{,}000$ (Fig.~\ref{fig: single skill results}d).
A comparable gap appears around $30$ minutes of training time,
including refinement search.

\paragraph{G1 visual carrying}
Figure~\ref{fig: carry vision curves} extends the comparison to carrying.
Refinement again improves visual-policy return: near $6$M interactions,
full SGPS reaches approximately $26{,}500$, compared with about $20{,}500$
without refinement. The gap also persists at equal displayed training
times: around $100$ minutes, the two variants reach approximately
$26{,}500$ and $21{,}000$, respectively.

\subsection{Perception-Aware Obstacle Tasks}
\label{sec: vision results}\label{sec: obstacles}
To evaluate visual policy learning beyond flat-ground locomotion, we consider
crawling beneath an overhead beam and clearing a hurdle. These tasks require
changes in body height and foot motion around obstacles. Both retain the Go2
action map and 53-dimensional tracking vector. Obstacle geometry is observed
through the three-frame depth stack, while the sampled reference specifies
the target motion. SGPS trains these visual policies directly, without a
separate state-policy teacher.

\paragraph{Crawl}
The robot approaches an overhead beam while tracking a $0.8$\,m/s trot.
To pass beneath it, the robot must lower its body, continue forward, and
then rise to resume trotting. The beam occupies $x\in[1.85,2.15]$\,m and
has an underside height of $0.30$\,m. Accordingly, the sampled reference
lowers the base-height target from $0.30$\,m to $0.20$\,m over
$x\in[1.55,2.45]$\,m while keeping the feet ground-referenced.

\paragraph{Hurdle}
The robot approaches a $0.10$\,m-high bar at $x=2.0$\,m under a
$1.2$\,m/s gallop specification. To clear the bar while maintaining forward
motion, the reference raises the base-height target from $0.30$\,m to
$0.38$\,m over $x\in[1.75,2.25]$\,m and increases the swing-foot targets.
These adjustments position takeoff and landing around the hurdle, after
which the robot should return to its locomotion gait.

Figure~\ref{fig: teaser} (bottom) illustrates both behaviors in the composed
Go2 rollout: lowering the body beneath the beam and raising it to clear
the hurdle.

\subsection{Humanoid Pushing and Carrying}
\label{sec: humanoid}

\begin{figure}[t]
    \centering
    \includegraphics[width=\columnwidth]{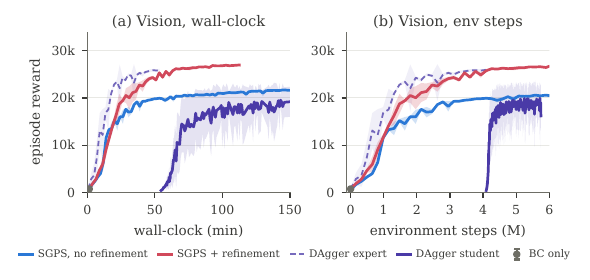}
    \caption{\textbf{G1 visual carrying and refinement ablation.}
    Episode return versus training time (a) and environment interactions (b).
    Red includes recurring refinement; blue retains initialization and
    FoPG tracking without it.
    SGPS shading shows $\pm1$ standard deviation across seeds.}
    \label{fig: carry vision curves}
\end{figure}

To evaluate SGPS on whole-body object interaction, we extend it to pushing
and carrying with the 29-DoF Unitree G1. These tasks require coordinating
locomotion with arm--object contact. We retain the same learning procedure
with task-specific dynamics, action maps, and objectives, without transferring
Go2 policy parameters. Both tasks include current and reference object
coordinates in the tracking observations, and the object is visible in the
depth images.

\paragraph{Crate pushing}
The G1 must push an $18$\,kg crate forward at $0.08$\,m/s while maintaining
an upright posture and contact with both hands. The crate is a $0.9$\,m cube
constrained to move along the forward axis, and the robot starts with its
hands $2$\,cm from the crate face. To encourage sustained pushing, the sampling
objective rewards hand--crate contact and forward progress while penalizing
torso or leg contact. Policy learning also tracks crate displacement, so
remaining stationary fails the objective.

\paragraph{Box carrying}
The G1 must follow a $0.5$\,m/s jogging reference while holding a $4$\,kg box
at chest height with both hands. The box measures
$0.28\times0.22\times0.22$\,m. Two soft wrist-to-box constraints transmit
the load while simplifying grasp acquisition and slip. To maintain the
carrying posture, the reference includes the box pose, and task-specific
action scales limit unnecessary arm motion.

Figure~\ref{fig: carry vision curves} shows the visual carrying results under
deterministic nominal-model evaluation with fixed reference-state
initializations. Within one hour, including refinement search, SGPS reaches
approximately $25{,}000$ return and successfully performs the task.
It approaches the state expert's final return within about $4.1$M interactions
and subsequently exceeds it. In comparison, the DAgger student improves later
and remains below its teacher over the displayed training range.

\subsection{Skill Composition and Hardware Evaluation}\label{sec: generalist}
For deployment, we distill the Go2 trot, crawl, and hurdle specialists into
one visual policy that transitions between behaviors using onboard
observations, without reference inputs or online MPC.

\begin{figure}[t]
    \centering
    \includegraphics[width=\columnwidth]{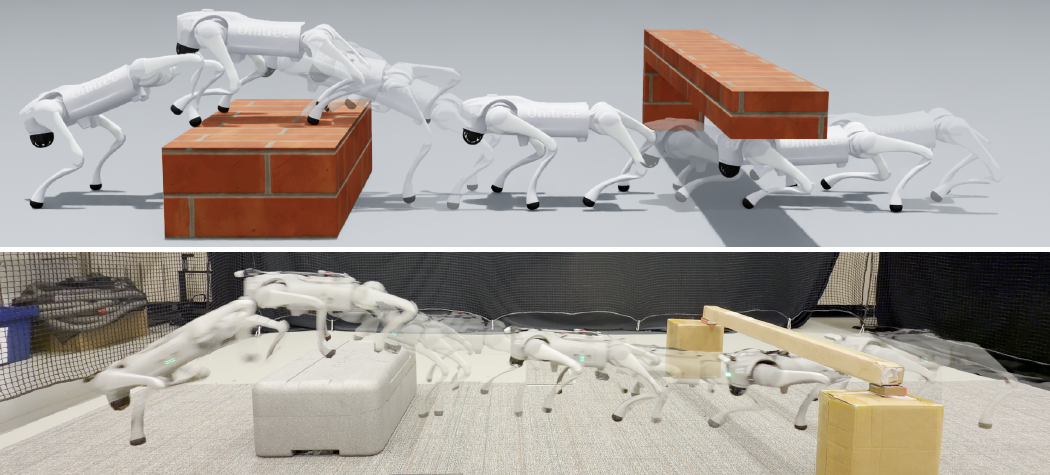}
    \caption{\textbf{Autonomous visual Go2 deployment of crawl, trot, and hurdle.}
    Simulation (top) and hardware (bottom) poses show crawling beneath a
    beam, trotting toward a box, then climbing, traversing, and landing,
    from right to left on analogous courses. Translucent poses show
    intermediate motion. The policy transfers zero-shot, with autonomous
    visual transitions and a velocity command as the only user input.}
    \label{fig: generalist hardware}
\end{figure}

\paragraph{Combining specialists}
Using DAgger~\cite{ross2011reduction}, we label states visited by the unified
policy with actions from the corresponding specialist.
Let $\mathcal{K}$ denote the specialist library,
$\bar{\rho}_i^k$ the collected state distribution for skill $k$ through
iteration $i$, and $p_k$ its sampling weight, with $\sum_k p_k=1$.
The unified policy minimizes the weighted behavior-cloning loss
\begin{equation}
\begin{split}
\boldsymbol{\theta}_{i+1}
={}&\arg\min_{\boldsymbol{\theta}}
\sum_{k\in\mathcal{K}}p_k
\mathbb{E}_{(\mathbf{s}_t,\mathbf{o}^{\mathrm{dep}}_t)\sim\bar{\rho}_i^k}
\Big[
\big\|\boldsymbol{\mu}_{\boldsymbol{\theta}}
    (\mathbf{o}^{\mathrm{dep}}_t) \\
&\qquad-
\mathrm{sg}\!\left[
\boldsymbol{\mu}_{\boldsymbol{\phi}_k}(g_k(\mathbf{s}_t))
\right]\big\|_2^2
\Big].
\end{split}
\label{eq: multiskill distillation}
\end{equation}
Here $\boldsymbol{\mu}$ denotes the policy output used for cloning and
$\mathrm{sg}$ fixes the specialist target. Deployment observations
$\mathbf{o}^{\mathrm{dep}}_t$ contain depth, proprioception, observation
history, learned state estimates, and the velocity command.
During training, specialists receive reference features through $g_k(\mathbf{s}_t)$,
while the unified policy receives no reference trajectory or phase.
The skill index $k$ selects the training supervisor; at deployment, depth
conditions behavior transitions.

To learn transitions, we collect obstacle-traversal and locomotion data.
Crawl covers the full reference window, while hurdle starts emphasize takeoff
and include post-obstacle states. We retain specialist tracking-error
termination rules to prevent stationary post-hurdle states from dominating
training.

\paragraph{Zero-shot hardware deployment}\label{sec: real world}
We deploy the unified policy on a Unitree Go2 EDU without hardware
fine-tuning. Inference runs on its onboard NVIDIA Jetson Orin module with
$8$\,GB shared memory. The robot uses an Intel RealSense D435i depth camera,
the onboard IMU, joint encoders, and foot force sensors. No sampled reference,
motion capture, external localization, or lidar odometry is used at deployment.

\emph{Observations and timing.} The control loop runs at $50$\,Hz and the
recurrent depth encoder at $10$\,Hz.
Depth is acquired at $848\times480$ and $30$\,Hz, converted from camera-axis
depth to range, cropped to the training camera's angular window, clipped at
$2$\,m, and resized to $58\times87$. The encoder uses the previous depth
frame to match the $100$\,ms training delay. These inputs differ from the
$64\times64$ specialist frame stacks.
Proprioceptive history spans ten previous frames ($200$\,ms).

\emph{Policy execution.} The deployed networks comprise a learned velocity estimator, a recurrent
depth encoder, and an actor with a proprioceptive-history encoder.
Base linear velocity is predicted from proprioception; orientation comes
from onboard IMU fusion. The actor outputs twelve joint-position
offsets, converted to targets as
$\mathbf{q}^{\mathrm{cmd}}=\mathbf{q}_{\mathrm{default}}
+0.25\operatorname{clip}(\mathbf{a},-4.8,4.8)$.
Joint PD gains are $k_p=40$\,N\,m/rad and $k_d=1$\,N\,m\,s/rad.
Onboard GPU TorchScript inference averages $2.7$\,ms ($4.3$\,ms at the
99th percentile); the complete control step averages $11$\,ms.
The training critic is discarded.

\paragraph{Autonomous behavior transitions}
Figure~\ref{fig: generalist hardware} shows crawling beneath a beam, resuming
trotting, then climbing onto and traversing a box before landing.
Transitions are autonomous, with velocity-only user input. The simulation
rollout shows the same sequence on an analogous course with a wider box.

\section{Conclusion and Limitations}\label{sec: conclusion}\label{sec: limitations}
SGPS combines recurring sampling-based action-target refinement with decoupled
FoPG, providing search-based supervision throughout training while keeping the
tracking reference fixed. Experiments show gains beyond initialization and
tracking alone. Specialist distillation yields a unified visual policy that
transfers zero-shot to a real Go2 and autonomously transitions between behaviors
using onboard observations.

However, SGPS relies on differentiable dynamics, successful sampled
references, and task-specific objectives. Moreover, the humanoid tasks simplify
object motion and hand--object interactions; complex manipulation remains
unexplored. Future work includes realistic object interactions, long-horizon
manipulation, and humanoid hardware deployment.

\section*{Acknowledgement}
We acknowledge the use of ChatGPT and Claude to assist with editing, grammar, and code generation for plots.

\IEEEtriggeratref{27}
\bibliographystyle{IEEEtran}
\bibliography{reference}

\end{document}